\theoremstyle{plain}
\newtheorem{theorem}{Theorem}[section]
\newtheorem{lemma}[theorem]{Lemma}
\theoremstyle{definition}
\theoremstyle{remark}
\newtheorem{remark}[theorem]{Remark}
\def\int{\displaystyle\mathop {\mbox{\rm int}}}    
\newcommand{\bx}{{\bf x}}
\newcommand*{\rom}[1]{\expandafter\@slowromancap\romannumeral #1@}
\title{On-the-Fly OVD Adaptation with FLAME: \underline{F}ew-shot \underline{L}ocalization via \underline{A}ctive \underline{M}arginal-Samples \underline{E}xploration}
\author{%
  \textbf{Yehonathan Refael},
  \textbf{Amit Aides},
  \textbf{Aviad Barzilai},\\
  \textbf{George Leifman, Vered Silverman, Bolous Jaber, Tomer Shekel,
  Genady Beryozkin}\\
  Google Research
}
\begin{document}

\maketitle

\begin{abstract}
Open-vocabulary object detection (OVD) models offer remarkable flexibility by detecting objects from arbitrary text queries. However, their zero-shot performance in specialized domains like Remote Sensing is often compromised by the inherent ambiguity of natural language, limiting critical downstream applications. For instance, an OVD model may struggle to distinguish between fine-grained classes such as "fishing boat" and "yacht" since their embeddings are similar and often hard to separate. This can hamper specific user goals, such as monitoring illegal fishing, by producing irrelevant detections. To address this, we propose a cascaded approach that couples the broad generalization of a large pre-trained OVD model with a lightweight few-shot classifier. Our method first employs the zero-shot model to generate high-recall object proposals. These proposals are then refined for high precision by a compact classifier trained in real-time on only a handful of user-annotated examples - drastically reducing the high costs of remote sensing imagery annotation. The core of our framework is FLAME, a one-step active learning strategy that selects the most informative samples for training. FLAME identifies, on the fly, uncertain marginal candidates near the decision boundary using density estimation, followed by clustering to ensure sample diversity. This efficient sampling technique achieves high accuracy without costly full-model fine-tuning and enables instant adaptation, within less than a minute, which is significantly faster than state-of-the-art alternatives. Our method consistently surpasses state-of-the-art performance on remote sensing benchmarks, establishing a practical and resource-efficient framework for adapting foundation models to specific user needs.
\end{abstract}
\begin{figure*}[hbt]
    \centering
    \includegraphics[width=0.75\linewidth]{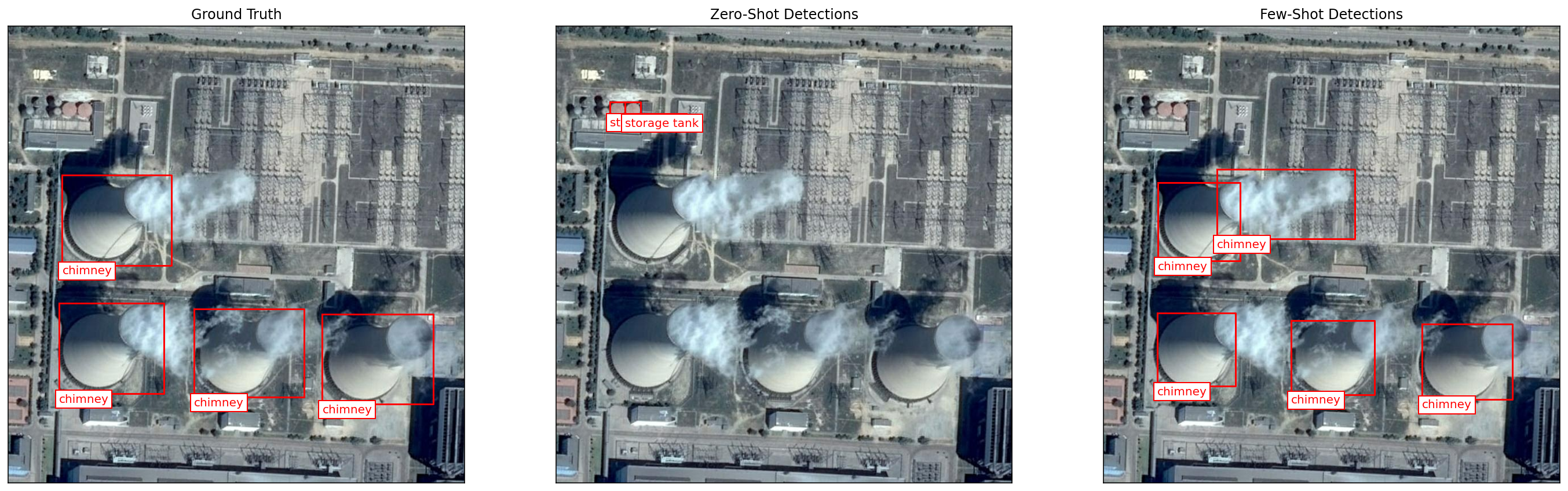}\\
    \includegraphics[width=0.75\textwidth]{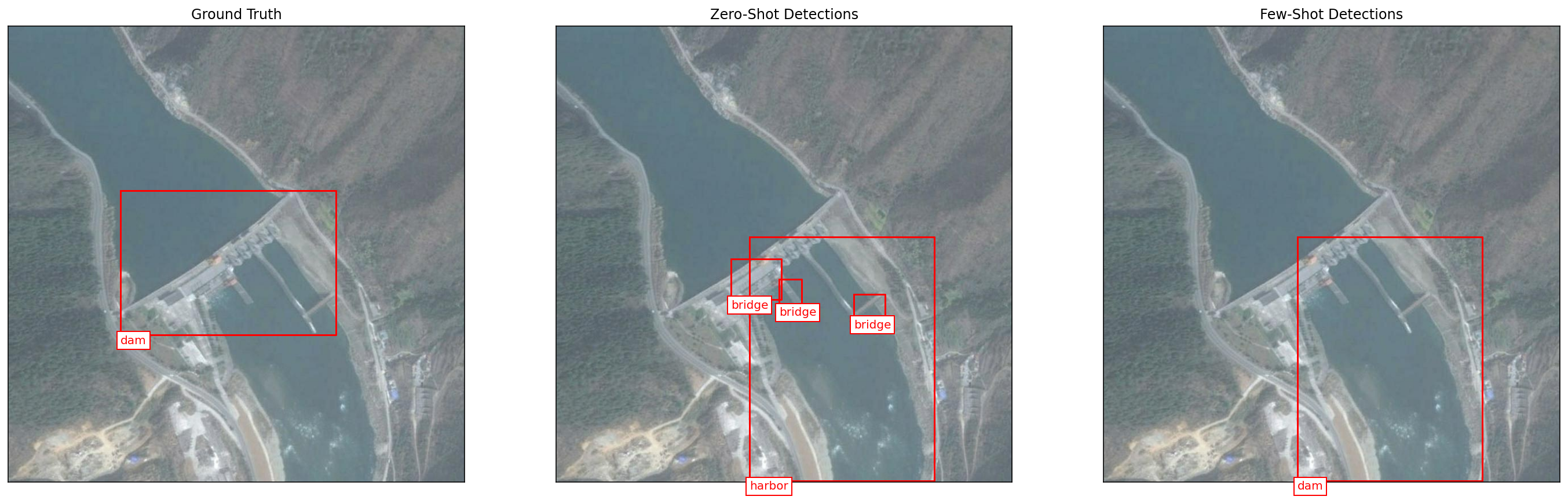}
    \caption{A visual demonstration of performance improvement from Zero-Shot to Few-Shot detection using DIOR dataset \cite{dior2023rs}. The Zero-Shot model (center) produces noisy and unreliable results, identifying the 'chimneys' but with low confidence and accompanied by several false positives. Our Few-Shot method (right) refines this output, successfully eliminating the false positives and accurately detecting all four chimneys shown in the Ground Truth (left). Similarly, the bottom row showcases the detection of a 'dam'. The Zero-Shot model struggles with false positives such as 'bridge' and 'harbor', which are corrected by the more precise Few-Shot approach.}\label{fig::chimney_detections}
\end{figure*}
\section{Introduction}
The recent advancements in large-scale vision-language models (VLMs) such as CLIP~\cite{radford2021learning} have catalyzed a paradigm shift in computer vision, giving rise to Open-Vocabulary Object Detection (OVD)~\cite{zareian2021open}. Unlike traditional detectors limited to predefined categories, OVD models can identify objects described by arbitrary natural language text, offering unprecedented flexibility. This is particularly transformative for remote sensing, where cataloging every possible class is intractable. Early OVD methods adapted standard detectors by replacing the classifier head with text embeddings~\cite{gu2021open}, leveraging the semantic richness of VLMs to generalize to unseen categories. However, the inherent ambiguity of text queries often leads to significant drops in precision, limiting the utility of pure zero-shot systems. 

To overcome the limitations of pure zero-shot systems, one alternative is Few-Shot Object Detection (FSOD) \cite{kang2019few}, which adapts models to novel categories using only a handful of annotated examples. In remote sensing,  FSOD is critical due to the difficulty and cost of acquiring dense labels\cite{barzilai2025recipeimprovingremotesensing}. While effective, common FSOD strategies like meta-learning or fine-tuning \cite{wang2020frustratingly} can be computationally intensive. To address this, Parameter-Efficient Fine-Tuning (PEFT) techniques such as LoRA \cite{hu2021lora} have emerged to alleviate these costs by reducing the number of trainable parameters.

These FSOD and PEFT strategies are primarily designed to create specialized detectors optimized for a new, specific set of target classes, for example \cite{bou2024exploring, jeune2023, le2022improving} are tailored for RS. However, these more efficient adaptation methods still involve a computationally demanding fine-tuning step. Even some recent prototype-based methods \cite{bou2024exploring} require tuning for hundreds of epochs, a process that can take hours and necessitates an accelerator like a GPU (a phase our proposed method eliminates, as we demonstrate later in this study). 

A distinct paradigm explores a hybrid approach that merges OVD and FSOD, using few-shot supervision to enhance and expand an open-vocabulary detector’s existing knowledge within a single, unified framework \cite{cheng2024revisiting}. Several strategies explore this hybrid model: prompt-based methods \cite{feng2022promptdet,zhang2022tip} learn continuous prompts from support sets to improve category alignment, while Transformer-based method like OV-DETR \cite{zang2022ovdetr} shows strong generalization.

The success of these hybrid approaches, which use only a handful of examples, hinges on the efficient selection of the most informative ones.
This challenge is addressed by Active Learning (AL)~\cite{settles2009active}, which queries an oracle for the most beneficial labels. Common AL strategies include uncertainty-based sampling~\cite{lewis1994sequential}, diversity-based sampling~\cite{sener2018active}, or their combinations~\cite{choi2021active}. Building on this foundation, our work proposes a cascaded OVD–FSOD framework with a novel AL strategy specifically designed to resolve semantic ambiguity in remote sensing imagery efficiently and effectively.
\begin{figure*}[htb]
    \centering
    \includegraphics[width=0.95\linewidth]{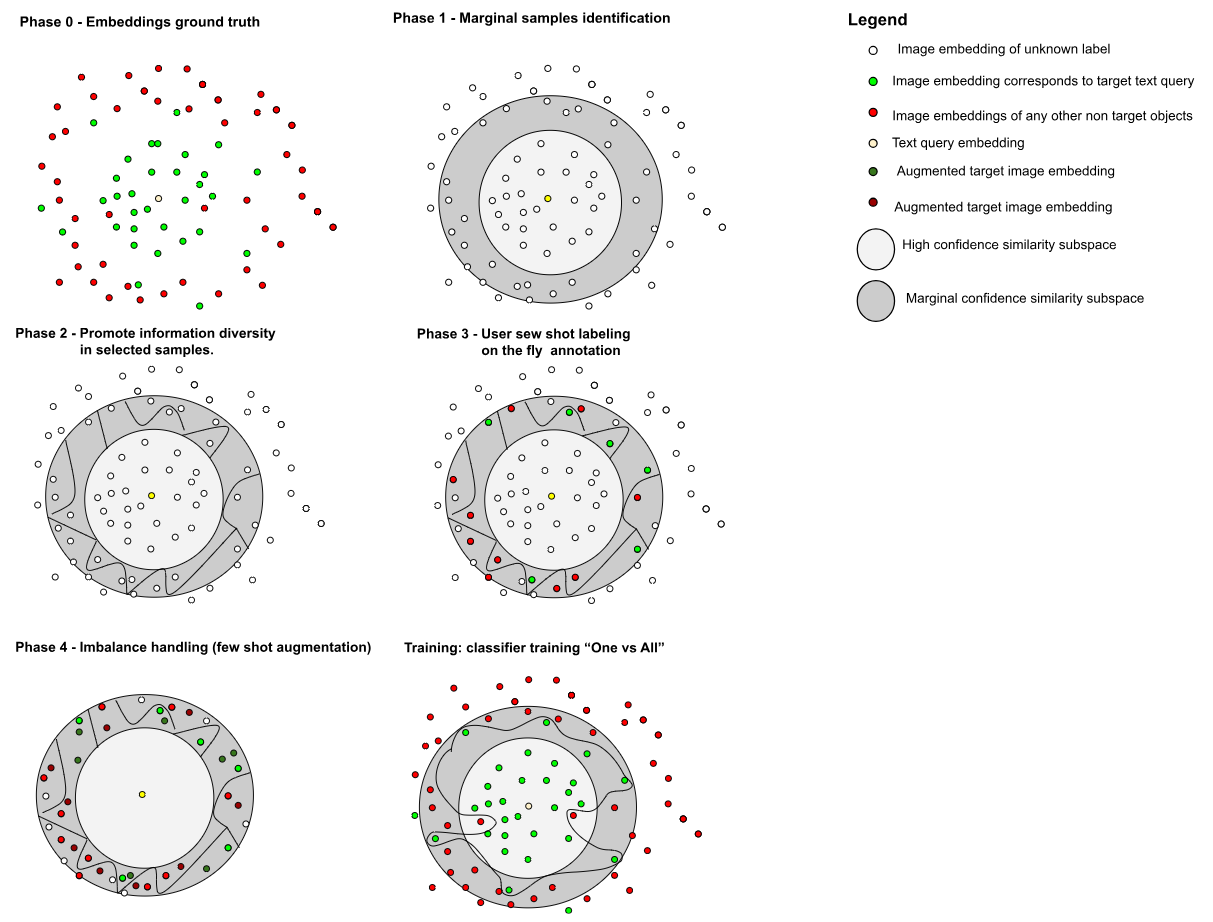}
    \caption{Overview of the proposed few-shot sampling method. The method follows the stages: (1) uncertainty-based filtering using density estimation to identify ambiguous candidates near the decision boundary, (2) clustering-based diversity sampling to ensure representative coverage, (3) interactive user annotation of the selected samples, (4) conditional data augmentation with SMOTE or SVM-SMOTE to balance classes, and (5) lightweight classifier training (e.g., SVM or MLP) on the augmented set. This cascaded process refines the zero-shot proposals from a large open-vocabulary detector into an accurate, real-time few-shot classifier without full-model fine-tuning}
    \label{fig::Method_illustration}
\end{figure*}
\section{Method}
\begin{algorithm*}[hbtp!]
\caption{FLAME: \underline{F}ew-shot \underline{L}ocalization via \underline{A}ctive \underline{M}arginal-Samples \underline{E}xploration}
\label{alg:fewshot-ovd}
\begin{algorithmic}[1]
\REQUIRE Unlabeled pool of embeddings $X=\{x_i\}_{i=1}^N\subset\mathbb{R}^d$, text embedding $t\in\mathbb{R}^d$; number of target shots $K$; PCA dimension $\ell$; Hyperparameters: Gaussian KDE bandwidth $h$, ratios $0<r_l<r_u<1$, imbalance threshold $\tau$.
\ENSURE Selected shots $\hat{X}:=\{\hat{x}_k\}_{k=1}^K$ 

\FOR{$i=1$ \TO $N$}
  \STATE Compute cosine similarities: $c_i \leftarrow \dfrac{x_i^\top t}{\|x_i\|\,\|t\|}$
  \STATE Augment examples: $\tilde{x}_i \leftarrow [x_i,\,c_i]$
\ENDFOR\\
{\color{blue} \# Marginal samples identification}
\STATE Project $\{\tilde{x}_i\}$ to $\ell$ dimensions via PCA to get $S=\{s_i\}_{i=1}^N$ 
\STATE Fit Gaussian KDE $\hat{f}$ (bandwidth $h$) on $S$:  $s^\star \leftarrow \arg\max_s \hat{f}(s)$
\STATE Find samples density boundaries $s_L,s_U$ s.t. 
$\hat{f}(s_L)=r_l\hat{f}(s^\star)$, and  $\hat{f}(s_U)=r_u\hat{f}(s^\star)$\\
{\color{blue} \# Promote information diversity}
\STATE Set $\mathcal{I}_{\text{marginal}} \leftarrow \{i \mid s_i\in[s_L,s_U]\}$, \quad $X_{\text{marginal}} \leftarrow \{x_i \mid i\in\mathcal{I}_{\text{marginal}}\}$
\STATE Run $k$-means clustering on $X_{\text{marginal}}$ into $K$ clusters $\{C_k\}_{k=1}^K$ 
\STATE Find examples closest to each center $\hat{X} \leftarrow \{\hat{x}_k\}_{k=1}^K$
\STATE 
{\color{blue} \# User few shot labeling}
\STATE User labels the few-shots $\hat{X}$ to obtain $D_{\text{labeled}}=\{(\hat{x}_k,y_k)\}_{k=1}^K$, $y_k\in\{0,1\}$\\
{\color{blue} \# Imbalance handling}
\STATE Compute imbalance ratio 
       $\rho \leftarrow \dfrac{\max_{c\in\{0,1\}} |\{y_k=c\}|}{\min_{c\in\{0,1\}} |\{y_k=c\}|}$
\IF{$\rho > \tau$}
  \STATE $\hat{X} \leftarrow \text{SMOTE}(D_{\text{labeled}})$
\ENDIF

\STATE \textbf{return} $\ \hat{X}$
\end{algorithmic}
\end{algorithm*}
\paragraph{Theory and Motivations.} Our framework lays on the observation that a binary classifier, whether an SVM \citep{vapnik1995support} or a positively homogeneous neural network \citep{polyakov2023homogeneousartificialneuralnetwork}, can be determined entirely by its margin (support) examples. Equivalently, if one removes all non-support training points and retrains, the resulting classifier is unchanged. Building on this, our few-shot procedure identifies a small set of near-boundary examples (the "few-shots"), asks the user to label them, and trains a lightweight model on the fly. Despite using only a handful of points, this model matches the classifier that would have been obtained from training on the full dataset, which may be too large or impractical for real-time training. The lemmas below formalize this fact for the SVM, soft margin SVM and for neural networks.  The proofs of the Lemmas are relegated to the Appendix~\ref{sec::proofs}.

\begin{lemma}[Support--determination for hard–margin SVM]\label{lem::margin_svm}
Let $\{(x_i,y_i)\}_{i=1}^n$ be linearly separable with $y_i\in\{\pm1\}$.
Consider the hard–margin SVM
\begin{equation}
\label{eq:svmP}
\min_{w,b}\ \tfrac12\|w\|^2
\;\text{s.t.}\; y_i\,(w^\top x_i + b)\ \ge\ 1,\;(i=1,\dots,n).
\tag{P}
\end{equation}
Let $(w^\star,b^\star)$ be an optimal solution and define the support set
$S \;:=\; \bigl\{ i\in[n] : y_i\,(w^{\star\top}x_i+b^\star)=1 \bigr\}.$
Then,
\begin{enumerate}
\item $(w^\star,b^\star)$ together with multipliers $\{\alpha_i^\star\}_{i\in S}$
forms a Karush-kuhn-tucker (KKT) \citep{ciano2024karush} pair for the \emph{reduced} problem that retains only constraints
indexed by $S$:
\begin{equation}
\label{eq:svmPS}
\min_{w,b}\ \tfrac12\|w\|^2
\quad\text{s.t.}\; y_i\,(w^\top x_i + b)\ \ge\ 1,\;(i\in S).
\tag{$\mathrm{P}_S$}
\end{equation}
\item Conversely, if $(\tilde w,\tilde b)$ and multipliers $\{\mu_i\}_{i\in S}$ satisfy
the KKT system of \eqref{eq:svmPS}, then extending the multipliers by
$\tilde\alpha_i:=\mu_i$ for $i\in S$ and $\tilde\alpha_i:=0$ for $i\notin S$
yields a KKT pair $(\tilde w,\tilde b,\tilde\alpha)$ for the full problem
\eqref{eq:svmP}.
\end{enumerate}
Consequently, \eqref{eq:svmP} and \eqref{eq:svmPS} have the same optimal solutions.
In particular, retraining the hard–margin SVM after removing all non–support points
$[n]\setminus S$ leaves the classifier $x\mapsto\operatorname{sign}(w^\top x+b)$ unchanged.
\end{lemma}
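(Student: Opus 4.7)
The strategy is to work directly with the Karush-Kuhn-Tucker (KKT) system of the hard-margin quadratic program, which consists of stationarity ($w=\sum_i \alpha_i y_i x_i$ and $\sum_i \alpha_i y_i=0$), primal feasibility ($y_i(w^\top x_i+b)\ge 1$), dual feasibility ($\alpha_i\ge 0$), and complementary slackness ($\alpha_i[y_i(w^\top x_i+b)-1]=0$). Because $\tfrac{1}{2}\|w\|^2$ is strictly convex in $w$ and the constraints are linear (so Slater's condition trivially holds under separability), KKT is both necessary and sufficient for optimality, and the optimal weight vector is unique.

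For Part 1, I will apply complementary slackness at the optimum $(w^\star,b^\star,\alpha^\star)$ to conclude $\alpha_i^\star=0$ for every $i\notin S$. Substituting this zero pattern into the stationarity identities collapses both sums to indices in $S$ alone. Primal feasibility for $i\in S$ holds with equality by the very definition of $S$, while dual feasibility and complementary slackness for $i\in S$ are inherited unchanged. Together these are exactly the KKT conditions of the reduced program \eqref{eq:svmPS}, establishing the first direction.

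For Part 2, I will extend the multipliers by zero on $[n]\setminus S$ and verify the KKT conditions of \eqref{eq:svmP} condition by condition. Stationarity is preserved because the added zero terms contribute nothing to the sums, and both dual feasibility and complementary slackness at $i\notin S$ are trivial under the zero extension. \emph{The main obstacle} is primal feasibility at the \emph{removed} indices: nothing in the KKT system of \eqref{eq:svmPS} directly constrains $(\tilde w,\tilde b)$ on points outside $S$. To overcome this I will invoke Part 1 to show that the optimal values of \eqref{eq:svmP} and \eqref{eq:svmPS} coincide: dropping constraints yields one inequality, while Part 1 exhibits $(w^\star,b^\star)$ as a feasible point of \eqref{eq:svmPS} with matching objective, giving the other. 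Strict convexity of $\tfrac{1}{2}\|w\|^2$ then forces $\tilde w=w^\star$, and matching any single equality constraint on the non-empty support set pins down $\tilde b=b^\star$. Feasibility of $(w^\star,b^\star)$ for \eqref{eq:svmP} therefore transfers verbatim to $(\tilde w,\tilde b)$, finishing the verification. Combining the two directions, \eqref{eq:svmP} and \eqref{eq:svmPS} share identical KKT systems and hence identical solution sets, which is precisely the closing claim that the classifier $x\mapsto\operatorname{sign}(w^\top x+b)$ is invariant to removing all non-support points.
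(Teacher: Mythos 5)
Your proposal follows essentially the same skeleton as the paper's proof: write the KKT system, use complementary slackness to kill the multipliers off $S$ for Part 1, extend by zero for Part 2, observe that the only nontrivial point is primal feasibility at the removed indices, and settle it by equating the optimal values of \eqref{eq:svmP} and \eqref{eq:svmPS} and then using strict convexity in $w$ (plus an active constraint on $S$) to force $\tilde w=w^\star$, $\tilde b=b^\star$. The one substantive difference is how the value equality is obtained: the paper argues through the duals (the dual of \eqref{eq:svmPS} is the dual of \eqref{eq:svmP} restricted to $S$, and strong duality transfers the common value), whereas you stay primal and invoke Part 1 — which is arguably cleaner. However, your stated justification for the nontrivial inequality is miswritten: exhibiting $(w^\star,b^\star)$ as a \emph{feasible} point of \eqref{eq:svmPS} with objective $v_P$ only gives $v_S\le v_P$, the same direction you already get from dropping constraints; it cannot give $v_P\le v_S$. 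What you actually need — and what Part 1 delivers once you combine it with the KKT sufficiency you stated at the outset (the reduced problem is convex with affine constraints) — is that $(w^\star,b^\star)$ together with $(\alpha_i^\star)_{i\in S}$ is a KKT point and hence a \emph{global optimum} of \eqref{eq:svmPS}, so $v_S=\tfrac12\|w^\star\|^2=v_P$. With that one-line repair the argument goes through exactly as in the paper; as a minor further point, both you and the paper implicitly use that $\tilde w=w^\star\neq 0$ forces some $\mu_i>0$, hence an active constraint in $S$ that pins down $\tilde b=b^\star$.
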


\begin{remark}[Kernel SVM]
The same argument holds verbatim for kernel SVMs by replacing $x_i$ with
$\varphi(x_i)$ in a feature space: at optimality $w^\star=\sum_{i\in S}\alpha_i^\star y_i\varphi(x_i)$,
so only support vectors ($\alpha_i^\star>0$) determine the classifier.
\end{remark}

Our claim for the non-separable embeddings case, which is the soft marginal SVM, is stated in the following lemma.
\setcounter{theorem}{1}
\begin{lemma}[Support-determination for soft-margin SVM]\label{lem::soft_margin_svm}
Let $\{(x_i,y_i)\}_{i=1}^n$ be possibly non-separable with $y_i\in\{\pm1\}.$ Consider a penalty parameter $C > 0,$ then the soft-margin SVM is formulated by
\begin{align*}\label{eq:svmP_}
\min_{w,b,\xi} &\quad \frac{1}{2}\|w\|^2 + C \sum_{i=1}^n \xi_i \\
&\text{s.t.} \quad y_i(w^\top x_i + b) \geq 1 - \xi_i, \quad (i=1, \dots, n) 
\tag{$P$}
\end{align*}
Let $(w^*, b^*, \xi^*)$ be an optimal solution to the soft-margin problem (\ref{eq:svmP}) with corresponding dual multipliers $\{\alpha_i^*\}_{i=1}^n$ and $\{\beta_i^*\}_{i=1}^n$. Define the support set $S$ as the set of indices with non-zero multipliers $\alpha_i^*$, $S := \{ i \in [n] \mid \alpha_i^* > 0 \}$
. Then,
\begin{enumerate}
    \item $(w^*, b^*, \{\xi_i^*\}_{i \in S})$ together with multipliers $\{\alpha_i^*, \beta_i^*\}_{i \in S}$ forms a Karush-kuhn-tucker (KKT) \citep{ciano2024karush}  pair for the reduced problem that retains only constraints indexed by $S$:
    \begin{align*}\label{eq:svmPS_}
    &\min_{w,b,\xi} 
    \quad \frac{1}{2}\|w\|^2 + C \sum_{i=1}^n \xi_i\\ &\quad 
    \text{s.t.}
    \quad y_i(w^\top x_i + b) \geq 1 - \xi_i, \quad (i \in S) 
    \tag{$P_S$} 
    \end{align*}
    \item Conversely, if $(\tilde{w}, \tilde{b}, \{\tilde{\xi}_i\}_{i \in S})$ and multipliers $\{\tilde{\alpha}_i, \tilde{\beta}_i\}_{i \in S}$ satisfy the KKT system for (\ref{eq:svmPS_}), then extending the solution by setting $\tilde{\alpha}_i = 0$, $\tilde{\xi}_i = 0$, and $\tilde{\beta}_i = C$ for all $i \notin S$ yields a full KKT pair $(\tilde{w}, \tilde{b}, \tilde{\xi}, \tilde{\alpha}, \tilde{\beta})$ for the full problem (\ref{eq:svmP_}).
\end{enumerate}
Consequently, (\ref{eq:svmP_}) and (\ref{eq:svmPS_}) have the same optimal solutions $(w, b)$. Retraining the soft-margin SVM after removing all non-support points ($i \notin S$) leaves the classifier $x \mapsto \operatorname{sign}(w^\top x + b)$ unchanged.
\end{lemma}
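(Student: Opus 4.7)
The plan is to mirror the argument in Lemma~\ref{lem::margin_svm} by writing out the full KKT system for (\ref{eq:svmP_}), extracting structural consequences at indices with $\alpha_i^*=0$, and then matching the surviving relations with the KKT system of (\ref{eq:svmPS_}).

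First I would form the Lagrangian $L(w,b,\xi,\alpha,\beta)=\tfrac{1}{2}\|w\|^2+C\sum_i\xi_i-\sum_i\alpha_i\bigl(y_i(w^\top x_i+b)-1+\xi_i\bigr)-\sum_i\beta_i\xi_i$, which yields the stationarity relations $w=\sum_i\alpha_i y_i x_i$, $\sum_i\alpha_i y_i=0$, $C=\alpha_i+\beta_i$, together with primal feasibility, dual feasibility ($\alpha_i,\beta_i\ge 0$), and the two complementary slackness identities $\alpha_i\bigl(y_i(w^\top x_i+b)-1+\xi_i\bigr)=0$ and $\beta_i\xi_i=0$. For Part 1, the crucial observation is that for any $i\notin S$ (so $\alpha_i^*=0$) the stationarity relation $C=\alpha_i^*+\beta_i^*$ forces $\beta_i^*=C>0$, and then $\beta_i^*\xi_i^*=0$ forces $\xi_i^*=0$; in particular the margin constraint at such $i$ holds with no slack: $y_i(w^{*\top}x_i+b^*)\ge 1$. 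After deleting indices outside $S$, the stationarity sums collapse (the removed $\alpha_i^*$ vanish) and every remaining KKT relation restricted to $i\in S$ is preserved verbatim, so $(w^*,b^*,\{\xi_i^*\}_{i\in S})$ with multipliers $\{\alpha_i^*,\beta_i^*\}_{i\in S}$ is a KKT pair for (\ref{eq:svmPS_}).

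Part 2 is the direction where I expect the main obstacle. Verifying stationarity, dual feasibility, and both complementary slackness conditions under the extension $\tilde\alpha_i=0,\ \tilde\xi_i=0,\ \tilde\beta_i=C$ for $i\notin S$ is immediate, because every added term is a product with a zero factor. The non-trivial check is primal feasibility $y_i(\tilde w^\top x_i+\tilde b)\ge 1$ at the deleted indices. To handle it, I would argue that (\ref{eq:svmPS_}) is a constraint relaxation of (\ref{eq:svmP_}), so its optimal value is no larger than the full optimum; Part 1 exhibits the full optimum as feasible for (\ref{eq:svmPS_}) with identical objective value, so the two optima coincide. Strict convexity of the objective in $w$ then pins $\tilde w=w^*$, and $\tilde b=b^*$ is determined under the standard mild non-degeneracy that at least one unbounded support vector $0<\tilde\alpha_i<C$ exists (otherwise $b$ is recoverable from any margin-active KKT equation). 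The required inequality then reduces to $y_i(w^{*\top}x_i+b^*)\ge 1$, which was already established in Part 1.

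Combining the two parts, the optimal primal pairs $(w,b)$ of (\ref{eq:svmP_}) and (\ref{eq:svmPS_}) coincide, and therefore the induced classifier $x\mapsto\operatorname{sign}(w^\top x+b)$ is unchanged after all non-support points have been removed, which is exactly the claim.
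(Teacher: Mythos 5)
Your proposal follows essentially the same route as the paper's proof: the same Lagrangian and KKT bookkeeping, the key observation that $\alpha_i^*=0$ forces $\beta_i^*=C$, hence $\xi_i^*=0$ and $y_i(w^{*\top}x_i+b^*)\ge 1$ at the dropped indices, the zero/$C$ multiplier extension for the converse, and equality of optimal values plus strict convexity in $w$ to settle primal feasibility of the deleted constraints (your explicit treatment of how $\tilde b$ is pinned down is, if anything, more careful than the paper's). One small touch-up: saying the full optimum is feasible for the reduced problem with the same value only yields $v_S\le v_P$, so to conclude the optima coincide you should instead note that Part 1 makes the restricted point a KKT point of the convex reduced problem and hence optimal for it; with that one-line fix your strict-convexity argument closes the loop exactly as in the paper.
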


Lastly, the following Lemma formalizes our claim for the case of neural network. 
\begin{lemma}[Support examples--determination for homogeneous networks]\label{lem::margin_homogeneous_networks}
Let $\Phi(\theta;\cdot)$ be binary classifier $L$-homogeneous\footnote{A network $\Phi(\theta;x)$ is called \emph{homogeneous} of degree $c>0$ if
for all $b>0$ and all $\theta,x$, it holds that $\Phi(b,\theta;\bx)=b^{c}\,\Phi(\theta;x)$.}
 in the weights parameters $\theta$
(e.g., ReLU, Leaky ReLU, sigmoid etc), and let the binary training set
$\{(x_i,y_i)\}_{i=1}^n$ be linearly separable by $\Phi(\theta;\cdot)$.
Consider gradient flow on logistic loss and assume it converges in \emph{direction}
to a KKT point $(\theta^\star,\lambda^\star)$ of the maximum–margin program
\begin{equation}
\min_{\theta}\ \tfrac12\|\theta\|^2
\quad\text{s.t.}\quad y_i\,\Phi(\theta;x_i)\ \ge 1\quad (i=1,\dots,n).
\label{eq:mm}
\end{equation}
Let the (margin/support) set be
$S \;:=\; \{\, i\in[n] : y_i\,\Phi(\theta^\star;x_i)=1 \,\}.$
Then,
\begin{enumerate}
\item $(\theta^\star,\{\lambda_i^\star\}_{i\in S})$ satisfies the KKT system of the
\emph{reduced} problem that keeps only constraints with indices in $S$:
\begin{equation}
\min_{\theta}\ \tfrac12\|\theta\|^2
\quad\text{s.t.}\quad y_i\,\Phi(\theta;x_i)\ \ge 1\quad (i\in S).
\label{eq:mmS}
\end{equation}
\item Conversely, if $(\tilde\theta,\{\mu_i\}_{i\in S})$ is a KKT pair for
\eqref{eq:mmS} and we define $\tilde\lambda_i:=\mu_i$ for $i\in S$ and
$\tilde\lambda_i:=0$ for $i\notin S$, then $(\tilde\theta,\tilde\lambda)$ is a KKT
pair for the full problem \eqref{eq:mm}.
\end{enumerate}
Consequently, the sets of KKT solutions of \eqref{eq:mm} and \eqref{eq:mmS} coincide.
In particular, retraining after removing all non-support points $[n]\setminus S$
produces the same limiting classifier $x\mapsto\operatorname{sign}(\Phi(\theta;x))$.
\end{lemma}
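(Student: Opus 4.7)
The plan is to mirror the KKT-based proof of Lemma~\ref{lem::margin_svm}, replacing linear stationarity with the $L$-homogeneous stationarity of \eqref{eq:mm}. I would first write out the KKT conditions for \eqref{eq:mm}: stationarity $\theta^\star=\sum_{i=1}^n \lambda_i^\star y_i\,\nabla_\theta\Phi(\theta^\star;x_i)$, primal feasibility $y_i\Phi(\theta^\star;x_i)\ge1$, dual feasibility $\lambda_i^\star\ge0$, and complementary slackness $\lambda_i^\star(y_i\Phi(\theta^\star;x_i)-1)=0$ at each $i\in[n]$.

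For Part~1, for every $i\notin S$ the definition of $S$ gives the strict inequality $y_i\Phi(\theta^\star;x_i)>1$, so complementary slackness forces $\lambda_i^\star=0$. The stationarity condition then collapses to $\theta^\star=\sum_{i\in S}\lambda_i^\star y_i\,\nabla_\theta\Phi(\theta^\star;x_i)$, while the remaining primal feasibility, dual feasibility, and slackness conditions restricted to $S$ are inherited unchanged. This is exactly the KKT system of the reduced program \eqref{eq:mmS}.

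For Part~2, I would start from a KKT pair $(\tilde\theta,\{\mu_i\}_{i\in S})$ of \eqref{eq:mmS} and set $\tilde\lambda_i:=0$ for $i\notin S$. Stationarity in the full problem follows because the added terms vanish; dual feasibility and complementary slackness at the new indices hold trivially, and primal feasibility and slackness at indices in $S$ are inherited from the reduced problem. The main obstacle is primal feasibility $y_i\Phi(\tilde\theta;x_i)\ge1$ for $i\notin S$: the reduced feasible region is strictly larger than the original, and without convexity one cannot directly conclude $\tilde\theta=\theta^\star$.

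To close this gap I would exploit $L$-homogeneity via Euler's identity $L\,\Phi(\tilde\theta;x)=\tilde\theta^\top\nabla_\theta\Phi(\tilde\theta;x)$; substituting the reduced stationarity and the reduced-problem complementary slackness yields the scaling identity $\|\tilde\theta\|^2=L\sum_{i\in S}\mu_i$, which matches the analogous identity at $\theta^\star$. Combined with the implicit-bias and directional-convergence results of Lyu--Li and Ji--Telgarsky for gradient flow on homogeneous networks, this lets me argue that the limit direction obtained by training on the reduced data coincides with the limit direction on the full data, so $\tilde\theta$ is parallel to $\theta^\star$ after rescaling. The strict primal feasibility of $\theta^\star$ at every $i\notin S$ then transfers to $\tilde\theta$, completing Part~2 and showing that the KKT solution sets of \eqref{eq:mm} and \eqref{eq:mmS} coincide; consequently the limiting classifier $\operatorname{sign}(\Phi(\theta;x))$ is unchanged by removing non-support points.
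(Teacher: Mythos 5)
Your Part 1 matches the paper's argument and is fine: complementary slackness forces $\lambda_i^\star=0$ off $S$, stationarity collapses to a sum over $S$, and the remaining conditions are literally the KKT system of \eqref{eq:mmS}. The trouble is Part 2. You correctly identify the crux — primal feasibility of the dropped constraints at a reduced KKT point — but the tools you propose do not deliver it. Euler's identity only yields the norm relation $\|\tilde\theta\|^2=L\sum_{i\in S}\mu_i$ (and the analogous relation at $\theta^\star$); two distinct KKT points of a nonconvex program can both satisfy this relation, so "matching identities" gives neither $\tilde\theta\parallel\theta^\star$ nor any control on $y_i\Phi(\tilde\theta;x_i)$ for $i\notin S$. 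The appeal to Lyu--Li and Ji--Telgarsky fares no better: those results guarantee that gradient flow on the reduced data converges in direction to \emph{some} KKT direction of \eqref{eq:mmS}, and since KKT points of a nonconvex max-margin program are generally non-unique, this does not pin the limit to $\theta^\star/\|\theta^\star\|$. Moreover, Part 2 as stated quantifies over arbitrary KKT pairs $(\tilde\theta,\{\mu_i\}_{i\in S})$ of \eqref{eq:mmS}, not over gradient-flow limits, so directional-convergence theorems are not even applicable to the object you must control; and without the strict convexity available in the SVM lemmas there is no argument forcing $\tilde\theta=\theta^\star$.

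For context, the paper's own proof does not rigorously close this step either: it asserts that since the constraints off $S$ are nonbinding at $(\theta^\star,\lambda^\star)$, they "remain inactive" at any reduced KKT point at that scale of the homogeneous model, which is an assertion rather than a derivation. So your instinct that this is the only nontrivial part of the lemma is right, and your route is genuinely different from the paper's (you try to import implicit-bias results where the paper simply declares the dropped constraints redundant), but as written your plan replaces one unproved claim with another; the feasibility of the dropped constraints at a general reduced KKT point remains a gap.
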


\begin{table*}[htb]
\centering
\caption{Comparison of few-shot object detection performance on the DOTA and DIOR datasets, based on 30-shot examples. The metric used is Average Precision (AP). Our proposed method achieves state-of-the-art results while demonstrating a significantly faster adaptation time.}
\begin{tabular}{l|cc|cc}
\toprule
\multirow{2}{*}{Method} & \multicolumn{2}{c|}{DOTA} & \multicolumn{2}{c}{DIOR} \\
& & & & \\
\midrule
Zero-shot OWL-ViT-v2 (Baseline) & 13.774\% & & 14.982\% & \\
\midrule
Zero-shot RS-OWL-ViT-v2 & 31.827\% & & 29.387\% & \\
\midrule
\midrule
Jeune et. al \cite{le2022improving} & 37.1\% & & 35.6\% & \\
\midrule
SIoU \cite{jeune2023} & 45.88\% & & 52.85\% & \\
\midrule
Prototype-based FSOD with DINOv2 \cite{bou2024exploring} & 41.40\% & & 26.46\% & \\
\midrule
\textbf{FLAME cascaded on RS-OWL-ViT-v2} & \textbf{53.96\%} & & \textbf{53.21\%} & \\
\bottomrule
\end{tabular}\label{tab:results_comparison}
\end{table*}
\paragraph{Marginal Samples Retrieval.} We propose a one-stage active learning strategy that pinpoints the most informative samples for training a lightweight, class-specific binary classifier. Algorithm~\ref{alg:fewshot-ovd} allows a large-scale, zero-shot OVD model to be adapted to a new target class efficiently, in real-time, and with minimal human supervision. The method is illustrated in Figure~\ref{fig::Method_illustration}.
First, we identify uncertain candidates by augmenting image embeddings with their zero-shot similarity to the text query and applying density estimation in a projected (PCA) augmented-embedding-space. Samples at the distribution’s margins are retained as they carry the most informative ambiguity. From this pool, we promote diversity by clustering and selecting one representative per cluster, yielding $K$ candidate shots for annotation. The user then labels these few informative samples, forming an initial dataset. To mitigate imbalance, we apply Synthetic Minority Over-sampling Technique (SMOTE) \cite{chawla2002smote} for extremely fast augmentation. This procedure would contribute to a balanced, representative, and efficient training to take place shortly after.
\begin{table*}[htb!]
\centering

\setlength{\tabcolsep}{4pt} 

\caption{Detailed per-class Average Precision (AP) comparison of our few-shot method against a zero-shot baseline (OWL-ViT-v2 fine-tuned on RS-WebLI) on the DIOR (left) and DOTA (right) datasets. The '–' symbol denotes a failure case for our method, occurring when the initial zero-shot step retrieved no relevant candidate images for a given class, thereby preventing the few-shot selection process. The results highlight the substantial AP gains achieved by our approach across a diverse range of object categories.}

\label{tab:dior_dota_side_by_side_revised}

\begin{minipage}[hb]{0.5\textwidth}
\centering
\textbf{DIOR Dataset}
\begin{tabular}{p{3.5cm}cc}
\toprule
\textbf{Class} & \textbf{Zero Shot} & \textbf{Few Shot} \\
\midrule
expressway service area & 0.03 & 0.82 \\
expressway toll station & 0 & 0.99 \\
airplane & 0.84 & 0.99 \\
airport & 0 & -- \\
baseball field & 0.62 & 0.93 \\
basketball court & 0.66 & 0.87 \\
bridge & 0.21 & 0.49 \\
chimney & 0.11 & 0.94 \\
dam & 0.04 & 0.71 \\
golf field & 0.01 & 0.72 \\
ground track field & 0.5 & 0.79 \\
harbor & 0.33 & 0.64 \\
overpass & 0.1 & 0.75 \\
ship & 0.72 & 0.93 \\
stadium & 0.57 & 0.86 \\
storage tank & 0.73 & 0.68 \\
tennis court & 0.8 & 0.57 \\
train station & 0.01 & -- \\
vehicle & 0.25 & 0.79 \\
windmill & 0.67 & 1 \\
\bottomrule
\end{tabular}
\end{minipage}
\hfill 
\begin{minipage}[htb]{0.45\textwidth}
\centering
\textbf{DOTA Dataset}
\begin{tabular}{p{3cm}cc}
\toprule
\textbf{Class} & \textbf{Zero Shot} & \textbf{Few Shot} \\
\midrule
Baseball Diamond & 0.32 & 0.88 \\
Basketball Court & 0.56 & 0.83 \\
Bridge & 0.09 & 0.28 \\
Container Crane & 0.03 & 0.95 \\
Ground Track Field & 0.4 & 0.68 \\
Harbor & 0.36 & 0.82 \\
Helicopter & 0.39 & 0.73 \\
Large Vehicle & 0.32 & 0.87 \\
Plane & 0.78 & 0.54 \\
Roundabout & 0.24 & 0.91 \\
Ship & 0.71 & 0.82 \\
Small Vehicle & 0.28 & 0.77 \\
Soccer Ball Field & 0.48 & 0.77 \\
Storage Tank & 0.79 & 0.55 \\
Swimming Pool & 0.71 & 0.58 \\
Tennis Court & 0.77 & 0.01 \\
\bottomrule
\end{tabular}
\end{minipage}
\end{table*}

Finally, using the (augmented) few-shots returned by Algorithm \ref{alg:fewshot-ovd}, we train a compact classifier, by default a Radial Basis kernel (RBF) SVM \citep{scholkopf1999advances}, which is trained to find a non-linear separating hyperplane. Note that our efficient framework could support many lightweight alternatives such as: Two-Layer Multi-Layer Perceptron (MLP) under binary cross-entropy loss function, or encoder-classifier with Triplet Loss \citep{dong2018triplet}. Illustration schema of the algorithm is presented in Figure~\ref{fig::Method_illustration}.

\section{Experiments}

To evaluate its performance, our few-shot method is benchmarked against a zero-shot baseline and leading state-of-the-art approaches, as summarized in Table \ref{tab:results_comparison}.
To that end, we leverage the following two remote sensing datasets: (1) DOTA \cite{xia2018dota} (Dataset for Object Detection in Aerial Images): A large-scale remote sensing dataset with multi-class, multi-oriented objects annotated in high-resolution aerial images for object detection; (2) DIOR \cite{li2020object} (Dataset for Object Detection in Optical remote sensing Images): A diverse large-scale dataset of optical remote sensing images containing numerous object categories across varying conditions and resolutions for robust detection. 

We first evaluate the zero-shot performance of the baseline OWL-ViT-v2 model \cite{minderer2024}, which was pre-trained on the vast, generic multilingual WebLI dataset \cite{chen2023palijointlyscaledmultilinguallanguageimage}. We then consider the RS-OWL-ViT-v2 model, a remote sensing variant of OWL-ViT-v2 fine-tuned on the RS-WebLI dataset \cite{barzilai2025recipeimprovingremotesensing}, which consists of three million aerial and satellite images from the original WebLI dataset and on a collection of $67,000$ aerial images annotated for remote sensing object detection across $34$ categories. This improved zero-shot performance model serves as the starting point for FLAME.

Table \ref{tab:results_comparison} demonstrates that the FLAME cascaded on RS-OWL-VIT-v2 method achieves the highest Average Precision (AP) on both the DOTA ($53.96\%$) and DIOR ($53.21\%$) datasets among all compared Few-Shot Object Detection (FSOD) models.This superior performance is coupled with a significantly faster adaptation time (approximately 1 minute per label on a CPU) compared to competing fine-tuning approaches that typically require a hardware accelerator (such as TPU or GPU) and several hours.

Following, Table \ref{tab:dior_dota_side_by_side_revised} provides a detailed per-class breakdown of the Average Precision (AP) on both the DIOR and DOTA datasets, comparing our few-shot method against the zero-shot baseline using the Zero-shot RS-OWL-VIT-v2 fine-tuned on RS-WebLI (which appear in second line of Table \ref{tab:results_comparison}). 
The missing values in the 'Few-Shot' columns indicate instances where the initial zero-shot retrieval step failed to find any relevant image embeddings. Without these initial candidates, the few-shot selection process could not proceed, resulting in a method failure for those specific classes. The Table demonstrates the substantial performance gains achieved by the Few-Shot (FLAME) method over the Zero-Shot baseline across a wide range of object categories on both the DIOR and DOTA datasets. For instance, the Few-Shot method dramatically improves AP for challenging classes like 'expressway toll station' on DIOR (from $0\%$ to $99\%$) and 'Container Crane' on DOTA (from $3\%$ to $95\%$), showcasing its effectiveness in resolving semantic ambiguity

\section{Discussion}

Remote sensing is a field that involves the acquisition of information about an object or area without making physical contact with it, typically using sensors on platforms such as satellites or aircraft. The proposed method provides a practical and resource-efficient framework for adapting foundational remote sensing OVD models to specific user needs. The cascaded architecture combines a large, pre-trained OVD model with a lightweight, few-shot classifier. This approach generates initial object-embedding proposals using the frozen weights of the zero-shot model, which are then refined by a compact classifier trained in real-time on a handful of user-annotated examples. This process drastically reduces annotation overhead while achieving high accuracy without the costly process of full-model fine-tuning. The core contribution is an efficient one-step active learning strategy that selects the most informative samples for user annotation. This strategy identifies a small number of uncertain candidates near the decision boundary using density estimation and then applies clustering to ensure a diverse training set. The method is designed to address the semantic ambiguity of text queries that hampers the zero-shot performance of pre-trained models.


\bibliographystyle{plain} 
\bibliography{references} 

\appendix
\section{Proofs}\label{sec::proofs}
\begin{proof}[Proof of Lemma \ref{lem::margin_svm}]
Introduce multipliers $\alpha_i\ge 0$ for the constraints in \eqref{eq:svmP}.  
The Lagrangian is
\[
\mathcal{L}(w,b,\alpha)=\tfrac12\|w\|^2-\sum_{i=1}^n \alpha_i\bigl(y_i(w^\top x_i+b)-1\bigr),
\]
and the KKT conditions are
\begin{align*}
&\text{(stationarity)} && w=\sum_{i=1}^n \alpha_i y_i x_i,\qquad \sum_{i=1}^n \alpha_i y_i=0,\\
&\text{(primal feas.)} && y_i(w^\top x_i+b)\ge 1\quad(\forall i),\\
&\text{(dual feas.)} && \alpha_i\ge 0\quad(\forall i),\\
&\text{(comp.\ slackness)} && \alpha_i\bigl(y_i(w^\top x_i+b)-1\bigr)=0\quad(\forall i).
\end{align*}

\emph{(1) Full $\Rightarrow$ reduced.}
Let $(w^\star,b^\star,\alpha^\star)$ be any KKT triple for \eqref{eq:svmP}, and set
$S=\{i:\,y_i(w^{\star\top}x_i+b^\star)=1\}$. By complementary slackness,
$\alpha_i^\star=0$ for every $i\notin S$. Hence stationarity reduces to
\[
w^\star=\sum_{i\in S}\alpha_i^\star y_i x_i,\qquad \sum_{i\in S}\alpha_i^\star y_i=0,
\]
and together with feasibility and slackness on $S$ these are exactly the KKT
conditions of the reduced problem \eqref{eq:svmPS}. Thus
$\bigl(w^\star,b^\star,(\alpha_i^\star)_{i\in S}\bigr)$ is KKT for \eqref{eq:svmPS}.

\emph{(2) Reduced $\Rightarrow$ full.}
Conversely, let $(\tilde w,\tilde b,(\mu_i)_{i\in S})$ satisfy the KKT system for
\eqref{eq:svmPS}, and define $\tilde\alpha_i:=\mu_i$ for $i\in S$ and
$\tilde\alpha_i:=0$ for $i\notin S$. Then stationarity, dual feasibility, and
complementary slackness for \eqref{eq:svmP} hold immediately. To check the remaining
primal feasibility on $[n]\setminus S$, compare duals: the dual of \eqref{eq:svmPS}
is the dual of \eqref{eq:svmP} restricted to indices $S$. Since an optimal dual
solution of \eqref{eq:svmP} has $\alpha_i^\star=0$ for $i\notin S$, the restricted
dual attains the same optimal value; by strong duality, \eqref{eq:svmP} and
\eqref{eq:svmPS} share the same optimal objective value. Because the primal objective
is strictly convex in $w$, any optimal reduced solution must satisfy $\tilde w=w^\star$,
and the equalities on $S$ then fix $\tilde b=b^\star$. Hence
$y_i(\tilde w^\top x_i+\tilde b)\ge 1$ for all $i\in[n]$, i.e., primal feasibility
for the full problem. Thus $(\tilde w,\tilde b,\tilde\alpha)$ is KKT for \eqref{eq:svmP}.

Parts (1)–(2) imply that \eqref{eq:svmP} and \eqref{eq:svmPS} have the same optimal
solutions. In particular, removing non–support points leaves the classifier
$x\mapsto\operatorname{sign}(w^\top x+b)$ unchanged.
\end{proof}

\begin{proof}[Proof of Lemma \ref{lem::soft_margin_svm}]
Let $(w^*,b^*,\xi^*;\alpha^*,\beta^*)$ be a KKT pair of \eqref{eq:svmP}, where the Lagrangian is
$\mathcal L=\tfrac12\|w\|^2+C\sum_i\xi_i-\sum_i\alpha_i\big(y_i(w^\top x_i+b)-1+\xi_i\big)-\sum_i\beta_i\xi_i$
with $\alpha_i,\beta_i\ge 0$ and the implicit constraints $\xi_i\ge 0$. The KKT conditions read:
(i) $w=\sum_i\alpha_i y_i x_i$, $\sum_i \alpha_i y_i=0$, and $\alpha_i+\beta_i=C$; (ii) $y_i(w^\top x_i+b)\ge 1-\xi_i$, $\xi_i\ge 0$; (iii) $\alpha_i(1-\xi_i-y_i(w^\top x_i+b))=0$, $\beta_i\xi_i=0$. Define $S:=\{i:\alpha_i^*>0\}$. Since $\alpha_i^*=0$ for $i\notin S$, the stationarity equations at the starred point reduce to $w^*=\sum_{i\in S}\alpha_i^* y_i x_i$ and $\sum_{i\in S}\alpha_i^* y_i=0$, while $\alpha_i^*+\beta_i^*=C$ holds for $i\in S$. Together with primal/dual feasibility and complementary slackness restricted to $i\in S$, this shows that $(w^*,b^*,\{\xi_i^*\}_{i\in S};\{\alpha_i^*,\beta_i^*\}_{i\in S})$ satisfies the KKT system of the reduced problem \eqref{eq:svmPS}. Moreover, for $i\notin S$ we have $\alpha_i^*=0$ and thus $\beta_i^*=C$, which by $\beta_i^*\xi_i^*=0$ forces $\xi_i^*=0$ and hence $y_i((w^*)^\top x_i+b^*)\ge 1$, i.e., the dropped constraints are strictly satisfied at $(w^*,b^*)$. Conversely, take any KKT pair $(\tilde w,\tilde b,\{\tilde\xi_i\}_{i\in S};\{\tilde\alpha_i,\tilde\beta_i\}_{i\in S})$ for \eqref{eq:svmPS} and extend by setting $\tilde\alpha_i:=0$, $\tilde\beta_i:=C$, $\tilde\xi_i:=0$ for $i\notin S$. Then $\,\tilde w=\sum_{i\in S}\tilde\alpha_i y_i x_i=\sum_{i=1}^n\tilde\alpha_i y_i x_i\,$ and $\sum_{i=1}^n\tilde\alpha_i y_i=0$, while $\tilde\alpha_i+\tilde\beta_i=C$ and the complementary slackness equalities hold for all $i$; if $y_i(\tilde w^\top x_i+\tilde b)\ge 1$ for $i\notin S$ (as occurs at any optimum of the full problem), the extension is a full KKT pair for \eqref{eq:svmP}. Finally, letting $v_P$ and $v_S$ be the optimal values of \eqref{eq:svmP} and \eqref{eq:svmPS}, the restriction above shows $v_S\le v_P$, while any feasible $(w,b,\{\xi_i\}_{i\in S})$ of \eqref{eq:svmPS} can be augmented by $\xi_i^\uparrow:=\max\{0,1-y_i(w^\top x_i+b)\}$ for $i\notin S$ to give a feasible point of \eqref{eq:svmP} with no smaller objective, hence $v_P\le v_S$. Thus $v_P=v_S$, and since the objective is strictly convex in $w$, both problems share the same optimal $w$ (and a consistent $b$), so removing non-support points and retraining leaves the classifier $\operatorname{sign}(w^\top x+b)$ unchanged.
\end{proof}

\begin{proof}[Proof of Lemma~\ref{lem::margin_homogeneous_networks}]
Introduce multipliers $\lambda_i\ge 0$ for the constraints in \eqref{eq:mm}.
The Lagrangian is
\[
\mathcal{L}(\theta,\lambda)\;=\;\tfrac12\|\theta\|^2
\;-\;\sum_{i=1}^n \lambda_i\,y_i\,\Phi(\theta;x_i),
\]
and the KKT conditions read

\begin{align*}
&\text{(stationarity)} &&
\theta \;-\; \sum_{i=1}^n \lambda_i\,y_i\,\nabla_\theta \Phi(\theta;x_i)\;=\;0,\\
&\text{(primal feasibility)} &&
y_i\,\Phi(\theta;x_i)\;\ge\;1\quad(\forall i),\\
&\text{(dual feasibility)} &&
\lambda_i\;\ge\;0\quad(\forall i),\\
&\text{(complementary slackness)} &&
\lambda_i\bigl(y_i\,\Phi(\theta;x_i)-1\bigr)\;=\;0\quad(\forall i).
\end{align*}

\emph{(1) Full $\Rightarrow$ reduced.}
Let $(\theta^\star,\lambda^\star)$ be a KKT pair for \eqref{eq:mm} and
$S=\{i:\,y_i\,\Phi(\theta^\star;x_i)=1\}$.
By complementary slackness, $\lambda_i^\star=0$ for every $i\notin S$, so the
stationarity condition reduces to
\[
\theta^\star \;-\; \sum_{i\in S} \lambda_i^\star\,y_i\,\nabla_\theta \Phi(\theta^\star;x_i)\;=\;0.
\]
Together with primal/dual feasibility and complementary slackness restricted to
$i\in S$, these are precisely the KKT conditions of the reduced problem
\eqref{eq:mmS}. Hence $(\theta^\star,(\lambda_i^\star)_{i\in S})$ is KKT for
\eqref{eq:mmS}.

\emph{(2) Reduced $\Rightarrow$ full.}
Conversely, let $(\tilde\theta,(\mu_i)_{i\in S})$ satisfy the KKT system for
\eqref{eq:mmS} and define $\tilde\lambda_i:=\mu_i$ for $i\in S$ and
$\tilde\lambda_i:=0$ for $i\notin S$.
Dual feasibility and complementary slackness for \eqref{eq:mm} are immediate.
The stationarity condition for \eqref{eq:mm} at $(\tilde\theta,\tilde\lambda)$ is
\[
\tilde\theta \;-\; \sum_{i\in S} \mu_i\,y_i\,\nabla_\theta \Phi(\tilde\theta;x_i)\;=\;0,
\]
which coincides with the reduced stationarity condition. Primal feasibility on $S$
holds by assumption. For $i\notin S$, the constraints are nonbinding at the full
KKT point $(\theta^\star,\lambda^\star)$ used to define $S$; hence, at that scale of
the homogeneous model, they are redundant. In particular, any KKT pair of the
reduced problem that satisfies the above stationarity (which matches the full one
with $\tilde\lambda_i=0$ on $S^c$) and the inequalities on $S$ also satisfies
$y_i\,\Phi(\tilde\theta;x_i)\ge 1$ for all $i\notin S$ (the added constraints remain
inactive), and therefore $(\tilde\theta,\tilde\lambda)$ is KKT for \eqref{eq:mm}.

Combining (1)–(2), the KKT solution sets of \eqref{eq:mm} and \eqref{eq:mmS}
coincide. Consequently, removing all non–support points leaves the limiting
classifier $x\mapsto \operatorname{sign}(\Phi(\theta;x))$ unchanged.
\end{proof}
\end{document}